\newtheorem{theorem}{{Theorem}}[section]
\newcommand{\bW}{\textbf{W}}
\title{IMMIGRATE: A Margin-based Feature Selection Method with Interaction Terms}
\author{
  Ruzhang Zhao \\
  Department of Biostatistics\\
  Bloomberg School of Public Health\\
  Johns Hopkins University \\
  Baltimore, MD 21205, USA\\
  \texttt{rzhao@jhu.edu} \\
   \And
 Pengyu Hong* \\
  Department of Computer Science\\
  Brandeis University\\
Waltham, MA 02453, USA\\
  \texttt{hongpeng@brandeis.edu} \\
  \And
Jun S. Liu* \\
  Department of Statistics\\
  Harvard University\\
Cambridge, MA 02138, USA\\
  \texttt{jliu@stat.harvard.edu} \\
}
\begin{document}
\maketitle

\begin{abstract}
Traditional hypothesis-margin researches focus on obtaining large margins and feature selection. In this work, we show that the robustness of margins is also critical and can be measured using entropy. In addition, our approach provides clear mathematical formulations and explanations to uncover feature interactions, which is often lack in large hypothesis-margin based approaches. We design an algorithm, termed IMMIGRATE (Iterative max-min entropy margin-maximization with interaction terms), for training the weights associated with the interaction terms. IMMIGRATE simultaneously utilizes both local and global information and can be used as a base learner in Boosting. We evaluate IMMIGRATE in a wide range of tasks, in which it demonstrates exceptional robustness and achieves the state-of-the-art results with high interpretability.
\end{abstract}

\keywords{Hypothesis-margin \and Feature selection \and Entropy \and IMMIGRATE}

\section{Introduction}
Feature selection is one of the most fundamental problems in machine learning and pattern recognition \citep{fukunaga2013introduction}. The Relief algorithm by \citet{kira1992practical} is one of the most successful feature selection algorithms. It can be interpreted as an online learning algorithm that solves a convex optimization problem with a hypothesis-margin-based cost function. Instead of deploying exhaustive or heuristic combinatorial searches, Relief decomposes a complex, global and nonlinear classification task into a simple and local one. Following the large hypothesis-margin principle for classification, Relief calculates the weights of features, which can be used for feature selection. 
Considering the binary classification in a set of samples $\mathcal{P}$ with two kinds of labels, the hypothesis-margin of an instance $\vec{x}$ is later formally defined in \citet{gilad2004margin} 
 as $\frac{1}{2}(\|\vec{x} - \operatorname{NM}(\vec{x})\| - \|\vec{x} - \operatorname{NH}(\vec{x})\|)$, where $\operatorname{NH}(\vec{x})$ denotes the ``nearest hit," i.e., the nearest sample to $\vec{x}$ with the same label, while $\operatorname{NM}(\vec{x})$ denotes the ``nearest miss", the nearest sample to $\vec{x}$ with the different label.
The large hypothesis-margin principle has motivated several successful extensions of the Relief algorithm. For example, 
ReliefF \citep{kononenko1994estimating} uses multiple nearest neighbors. Simba \citep{gilad2004margin} recalculates the nearest neighbors every time the feature weights are updated. \citet{yang2008novel} consider global information to improve Simba. I-RELIEF \citep{sun2006iterative} identifies the nearest hits and misses in a probabilistic manner, which forms a variation of hypothesis-margin. LFE \citep{sun2008relief} extends Relief from feature selection to feature extraction using local information. IM4E is proposed by \citet{bei2015maximizing} to balance margin-quantity maximization and margin-quality maximization. Both approaches in \citet{sun2008relief}, \citet{bei2015maximizing} use a variation of hypothesis-margin proposed in \citet{sun2006iterative}.
\par
The Relief-based algorithms indirectly consider feature interactions by normalizing the feature weights \citep{urbanowicz2018Relief}, which, however, cannot directly reflect natural effects of associations and hence results in poor understanding on how feature interacts.
For example, Relief and many of its extensions cannot tell whether a high weight of a certain feature is caused by its linear effect or its interaction with other features \citep{urbanowicz2018Relief}. Furthermore, these methods cannot directly reveal and measure the impact of the interaction terms on classification results.
\par
To this end, we propose the {\textbf{I}terative \textbf{M}ax-\textbf{MI}n entropy mar\textbf{G}in-maximization with inte\textbf{RA}ction \textbf{TE}rms} 
algorithm (IMMIGRATE, henceforth). IMMIGRATE directly measures the influence of feature interactions and has the following characteristics. First, when defining hypothesis-margin, we introduce a new trainable quadratic-Manhattan measurement to capture interaction terms, which measures the interaction importance directly. Second, we take advantage of the margin stability by measuring the underlying entropy based on the distribution of instances. Third, we derive an iterative optimization algorithm to efficiently minimize the cost function. Fourth, we design a novel classification method that utilizes the learned quadratic-Manhattan measurement to predict the class of a new instance. Fifth, we design a more powerful approach (i.e., Boosted IMMIGRATE) by using IMMIGRATE as the base learner of Boosting \citep{schapire1990strength}. Sixth, to make IMMIGRATE efficient for analyzing high-dimensional datasets, we take advantage of IM4E \citep{bei2015maximizing} to obtain an effective initialization.

\par
The rest of the paper is organized as follows. Section~2 explains the foundation of the Relief algorithm, and Section~3 introduces the IMMIGRATE algorithm. Section~4 summarizes and discusses our experiments with different datasets, showing that IMMIGRATE achieves the state-of-the-art results, and Boosted IMMIGRATE outperforms other boosting classifiers significantly. The computation time of IMMIGRATE is comparable to other popular feature selection methods that consider interaction terms. Section~5 concludes the article with comparisons with related works and a short discussion.

\section{Review: the Relief Algorithm}
We first introduce a few notations used throughout the paper: $\vec{x}_{i} \in \mathbb{R}^{A}$ as the $i$-th instance in the training set $\mathcal{P}$; $y_i$ as the class label of $\vec{x}_{i}$; $N$ as the size of $\mathcal{P}$; $A$ as the number of features (i.e., attributes); $\vec{w}$ as the feature weight vector; and $|\vec{x}_{i}|$ as a vector where absolute value operation is element-wise. Relief \citep{kira1992practical} iteratively calculates the feature weights in $\vec{w}$ (Algorithm \ref{ag1}). The higher a feature weight is, the more relevant the corresponding feature is. After the calculation of feature weights, a threshold is chosen to select relevant features. Relief can be viewed as a convex optimization problem that minimizes the cost function in Equation~\ref{eq:relief}:
\begin{equation}
\begin{aligned}
 C& =\sum_{n=1}^{M} \big( \vec{w}^{\,T}  \big|\vec{x}_{n} - \operatorname{NH}(\vec{x}_{n})\big| - \vec{w}^{\,T}  \big|\vec{x}_{n} - \operatorname{NM}(\vec{x}_{n})\big| \big), \\
& \text{subject to}:\vec{w}\geq 0, \,\|\vec{w}\|_{2}^{2} = 1,\, \\
\end{aligned}
\label{eq:relief}
\end{equation}
where $M( \ll N)$ is a user defined number of randomly chosen training samples, $\operatorname{NH}(\vec{x})$ is the nearest "hit" (from the same class) of $\vec{x}$; $\operatorname{NM}(\vec{x})$ is the nearest "miss" (from a different class) of $\vec{x}$; and $\vec{w}^{\,T}\big|\vec{x}_{n} - \operatorname{NH}(\vec{x}_{n})\big|$
is the weighted Manhattan distance. Denote $\vec{u} = \sum_{n=1}^{M} \big( \big|\vec{x}_{n} - \operatorname{NH}(\vec{x}_{n})\big| -  \big|\vec{x}_{n} - \operatorname{NM}(\vec{x}_{n})\big| \big)$. Minimizing the cost function of Relief (\ref{eq:relief}) can be solved using the Lagrange multiplier method and the Karush--Kuhn--Tucker conditions \citep{kuhn2014nonlinear} to get a closed-form solution: $\vec{w} = (-\vec{u})^{+}/\|  (-\vec{u})^{+} \|_{2}$,
where $(\vec{a})^{+}$ truncates the negative elements to 0. 
This solution to the original Relief algorithm 
is important for understanding the Relief-based algorithms.
\par

\begin{algorithm}[H]
\setstretch{1.25}
  \caption{The Original Relief Algorithm}\label{original}
$N$: the number of training instances.\\
$A$: the number of features(i.e. attributes).\\
$M$: the number of randomly chosen training samples to update feature weight $\vec{w}$.\\
{\bf Input}: a training dataset $\{ z_{n} = (\vec{x}_{n},y_{n})\}_{n=1,\cdots,N}$.\\
{\bf Initialization}: Initialize all feature weights to 0: $\vec{w} = 0$.
  \begin{algorithmic}
  \setstretch{1.65}
      \For {$i$ = 1 \textbf{to} $M$}
        \State Randomly select an instance $\vec{x}_i$ and find its  $\operatorname{NH}(\vec{x}_i)$ and $\operatorname{NM}(\vec{x}_i)$.
        \State Update the feature weights by $\vec{w} = \vec{w} - (\vec{x}_i - \operatorname{NH}(\vec{x}_i))^2/M + (\vec{x}_i - \operatorname{NM}(\vec{x}_i))^2/M$, 
        \State where the square operation is element-wise.
      \EndFor
  \end{algorithmic}
\textbf{Return}: $ \vec{w}$.
\label{ag1}
\end{algorithm}

\section{IMMIGRATE Algorithm}
Without loss of generality, we establish the IMMIGRATE algorithm in a general binary classification setting. This formulation can be easily extended to handle multi-class classification problems. 
Let the whole data set be $\mathcal{P} = \{ z_{n}\mid z_{n}=(\vec{x}_{n}, y_{n}), \vec{x}_{n} \in \mathbb{R}^{A}, y_{n}=\pm 1\}_{n=1}^{N}$; the hit index set of $\vec{x}_n$ be $\mathcal{H}_{n} = \{ j\mid  z_{j}\in \mathcal{P}, y_{j}=y_{n} \,\& \,j \neq n\}$, and the miss index set of $\vec{x}_{n}$ be $\mathcal{M}_{n} = \{j\mid z_{j}\in \mathcal{P}, y_{j} \neq y_{n}\}$.

\subsection{Hypothesis-Margin}
Given a distance $d(\vec{x}_{i},\vec{x}_{j})$ between two instances,
$\vec{x}_{i}$ and $\vec{x}_{j}$, 
a hypothesis-margin \citep{gilad2004margin} is defined as 
$\rho_{n,h,m} = d(\vec{x}_{n},\vec{x}_{m}) - d(\vec{x}_{n},\vec{x}_{h})$, where $\vec{x}_{h}$ and $\vec{x}_{m}$ represent the nearest hit and nearest miss for instance $\vec{x}_{n}$, respectively. We adopt the probabilistic hypothesis-margin defined by \citet{sun2006iterative} as
\begin{equation}
\rho_{n} = \sum_{m\in\mathcal{M}_{n}}\beta_{n,m}d(\vec{x}_{n},\vec{x}_{m}) - \sum_{h\in\mathcal{H}_{n}}\alpha_{n,h}d(\vec{x}_{n},\vec{x}_{h}),
\label{def2}
\end{equation}
where $\alpha_{n,h} \geq 0$, $\beta_{n,m} \geq 0$,  $\sum_{h\in\mathcal{H}_{n}}\alpha_{n,h} =1 $, $\sum_{m\in\mathcal{M}_{n}}\beta_{n,m} = 1$, for $\forall\, n \,\in \,\{1,\cdots, N\}$. As in the above design, the hidden random variable $\alpha_{n,h}$ represents the probability that $\vec{x}_{h}$ is the nearest hit of instance $\vec{x}_{n}$, while $\beta_{n,m}$ indicates the probability that $\vec{x}_{m}$ is the nearest miss of instance $\vec{x}_{n}$. In the rest of the paper, for conciseness, we will use margin to indicate hypothesis-margin.
\par
\subsection{Entropy to Measure Margin Stability}
The distributions of hits and misses can be used to evaluate the stability of margins (i.e., margin quality). 
A more stable margin can be obtained by considering the distributions of instances with the same or different labels with respect to the target instance. A margin is deemed stable if it will not be greatly reduced by changes to only a few neighbors of the target instance. Considering an instance $\vec{x}_{n}$, its probabilities $\{\alpha_{n,h}\}$ and $\{\beta_{n,m}\}$ represent the distributions of its hits and misses, respectively. We can use the {\it hit entropy} $E_{hit}(\vec{x}_n) = -\sum_{h\in \mathcal{H}_{n}} \alpha_{n,h} \log\alpha_{n,h}$ and {\it miss entropy} $E_{miss}(\vec{x}_n) = -\sum_{m\in \mathcal{M}_{n}} \beta_{n,m} \log\beta_{n,m}$ to evaluate the stability of $\vec{x}_n$'s margin. The following two scenarios help explain the intuition of using these entropy. Scenario A: all neighbors are distributed evenly around the target instance; scenario B: the neighbor distribution is highly uneven. An extreme example for scenario B is that one instance is quite close to the target and the rest are quite far away from the target. An easy experiment to test the stability is to discard one instance from the system and to check how it influences the margin. In scenario A, if the closest neighbor (no matter if it is hit or miss) is discarded, the margin changes only slightly because there are many other hits/misses evenly distributed around the target. In scenario B, if the closest neighbor is a miss, its removal can increase the margin significantly. On the contrary, if the closest neighbor is a hit, removing it can decrease the margin significantly. Intuitively speaking, hits prefer scenario A and misses favor scenario B.

\par
Since scenarios A and B correspond to high and low entropy, respectively, the margin can benefit from a large hit entropy $E_{hit}$ (e.g., scenario A) and a low miss entropy $E_{miss}$ (e.g., scenario B). We can set up a framework to maximize the hit entropy and minimize the miss entropy, which is equivalent to make the margin in Equation~\ref{def2} the most stable. \citet{bei2015maximizing} use the term max-min entropy principle to describe the process that maximizes the hit entropy and minimize the loss entropy to maximize the margin quality. The process of stabilizing margin is an extension of the large margin principle.

\subsection{Quadratic-Manhattan Measurement}
We extend the margin in Equation~\ref{def2} by using a new quadratic-Manhattan measurement defined as:
\begin{equation}
q(\vec{x}_{i},\vec{x}_{j}) = \big|\vec{x}_{i} - \vec{x}_{j}\big|^{\,T}\textbf{W}\big|\vec{x}_{i} - \vec{x}_{j}\big|,
\label{def4}
\end{equation}
where $\textbf{W}$ is a non-negative symmetric matrix (element-wise non-negative) with its Frobenius norm $\|\textbf{W}\|_{F}=1$. The quadratic-Manhattan measurement is a natural extension of the weight vector, and the distance defined in Equation~\ref{def4} is a natural extension of the weighted Manhattan distance in Equation~\ref{eq:relief}. Off-diagonal elements in $\textbf{W}$ capture feature interactions and diagonal elements in $\textbf{W}$ capture main effects. 
To understand why quadratic-Manhattan measurement can capture the influence of interactions, we observe that the effect of element
$w_{a,b}$ ($a \neq b$) in $\textbf{W}$
enters into (\ref{def4}) 
as the coefficient for the combination of the $a$-th and $b$-th elements in vector $\big|\vec{x}_{i} - \vec{x}_{j}\big|$.  In Relief-based algorithms, the  weighted Manhattan distance Equation~\ref{eq:relief} can be equivalently captured by the feature weight update equation in Algorithm~\ref{ag1}. Similarly, $w_{a,b}$ can be updated using the combination of the $a$-th and $b$-th features based on a randomly given instance.
We thus define our new margin using the quadratic-Manhattan measurement as
\begin{equation}
\sum_{m\in\mathcal{M}_{n}}\beta_{n,m}q(\vec{x}_{n},\vec{x}_{m}) - \sum_{h\in\mathcal{H}_{n}}\alpha_{n,h}q(\vec{x}_{n},\vec{x}_{h}).   \end{equation}

\subsection{IMMIGRATE}
We design the following cost function to maximize our new margin, and simultaneously, the hit entropy and miss entropy are optimized.

\begin{equation}
\begin{aligned}
 C & =  \sum_{n=1}^{N} \bigg(\sum_{h\in \mathcal{H}_{n} }  \alpha_{n,h}\big|\vec{x}_{n} - \vec{x}_{h}\big|^{\,T}\textbf{W}\big|\vec{x}_{n} - \vec{x}_{h}\big| -  \sum_{m\in \mathcal{M}_{n} }  \beta_{n,m}\big|\vec{x}_{n} - \vec{x}_{m}\big|^{\,T} \textbf{W} \big|\vec{x}_{n} - \vec{x}_{m}\big| \bigg) \\
& + \sigma \sum_{n=1}^{N} [E_{miss}(z_{n})-E_{hit}(z_{n})] , \\
& \text{subject to}:\textbf{W} \geq 0, \,\textbf{W}^{T} = \textbf{W},\, \|\textbf{W}\|_{F}^{2} = 1, \\
& \forall n, \sum_{h\in \mathcal{H}_{n}} \alpha_{n,h} = 1 ,  \sum_{m\in \mathcal{M}_{n}} \beta_{n,m} = 1, \text{and} \,\, \alpha_{n,h} \geq 0, \beta_{n,m} \geq \,0,\,\,  \\
\end{aligned}
\label{eq11}
\end{equation}
where 
$E_{miss}(z_{n}) =  - \sum_{m\in \mathcal{M}_{n} }  \beta_{n,m}\log\beta_{n,m}$, $E_{hit}(z_{n}) =  - \sum_{h\in \mathcal{H}_{n} }  \alpha_{n,h}\log\alpha_{n,h} $, and $\sigma$ is a hyperparameter that can be tuned via internal cross-validation. 
\par
We also design the following optimization procedure containing two iterative steps to find \textbf{W} that minimizes the cost function. The framework starts from a randomly initialized \textbf{W} and stops when the change of cost function is less than a preset limit or the iteration number reaches a preset threshold. In practice, we find that it typically takes less than 10 iterations to stop and obtain good results. Based on our experiments,  different initialization of \textbf{W} does not influence the results of the iterative optimization. 
The computation time of IMMIGRATE is comparable to other interaction related methods such as SODA \citep{li2018robust}, hierNet \citep{bien2013lasso}.
\par

As depicted by the flow-chart in  Figure~\ref{fig:flowchart}, the IMMIGRATE algorithm iteratively optimizes the cost function Equation~\ref{eq11}. It starts with a random initiation satisfying certain boundary conditions, and proceeds to iterate the two steps as detailed below in Algorithm~\ref{ag5}.

\begin{figure}[H]
    \centering
    \includegraphics[width=10cm]{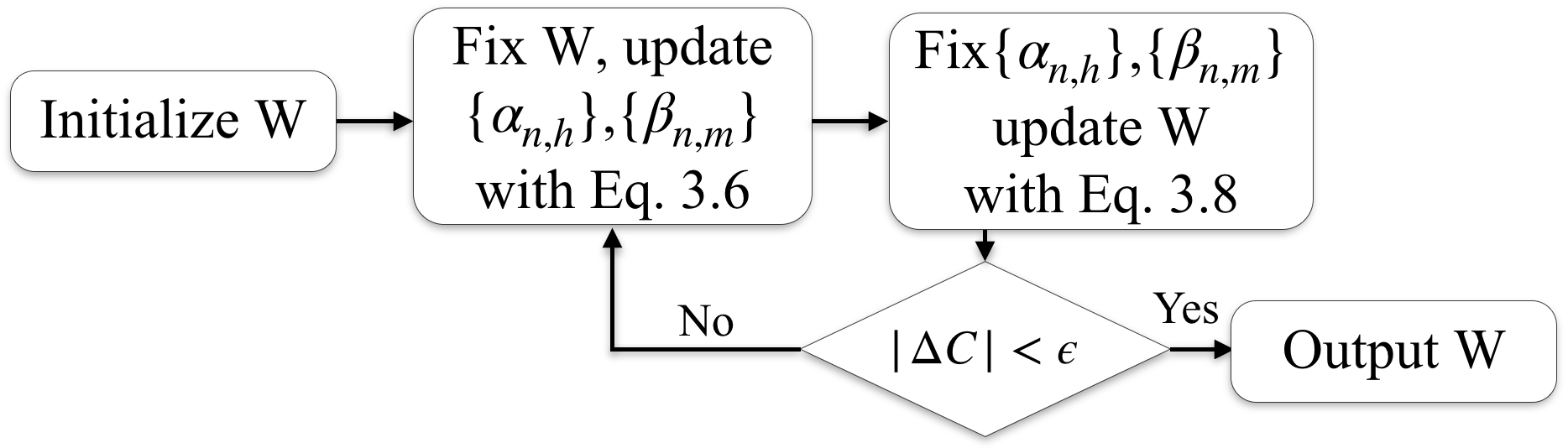}
    \caption{Flow chart of IMMIGRATE.  {\it Step 0}: Initialize $\textbf{W}$  randomly,  under the constraints $\textbf{W}\geq 0$, $\textbf{W}^{T} = \textbf{W}$ and $\|\textbf{W}\|_{F}^{2}=1$).   {\it Step 1}: Fix $\textbf{W}$, update $\{\alpha_{n,h}\}$ and $\{\beta_{n,m}\}$.  {\it Step 2}: Fix $\{\alpha_{n,h}\}$ and $\{\beta_{n,m}\}$, update $\textbf{W}$. Steps 1 and 2 are iterated to optimize the cost function, where $\Delta C$ is the change of the cost function in (\ref{eq11}) and $\epsilon$ is a pre-set limit. }
    \label{fig:flowchart}
\end{figure}

\subsubsection{Step 1: Fix $\textbf{W}$, Update $\{\alpha_{n,h}\}$ and $\{\beta_{n,m}\}$}
\label{sec:updatealpha}
Fixing $\textbf{W}$ and setting $\frac{\partial C}{\partial \alpha_{n,h}} = 0$ and $\frac{\partial C}{\partial \beta_{n,m}} = 0 $, we can obtain the closed-form updates of $\alpha_{n,h}$ and $\beta_{n,m}$ as
\begin{equation}
\begin{aligned}
& \alpha_{n,h} =\frac{exp(-q(\vec{x}_{n},\vec{x}_{h})/\sigma)}{\sum_{h\in\mathcal{H}_{n}}exp(-q(\vec{x}_{n},\vec{x}_{h})/\sigma)}, \\
& \beta_{n,m} = \frac{ exp(-q(\vec{x}_{n},\vec{x}_{m})/\sigma)}{\sum_{k\in\mathcal{M}_{n}}exp(-q(\vec{x}_{n},\vec{x}_{k})/\sigma)}.\\
\end{aligned}
\label{eq4}
\end{equation}
\par
The Hessian matrix of C w.r.t. probability pair ($\alpha_{n,h}$, $\beta_{n,m}$) is:
\begin{equation}
\begin{aligned}
\frac{\partial^{2}C}{\partial(\alpha_{n,h},\beta_{n,m})} = \left(\begin{array}{ccc} \sigma/\alpha_{n,h} & \partial^{2}C/\partial\beta_{n,m}\alpha_{n,h}  \\  \partial^{2}C/\partial\beta_{n,m}\alpha_{n,h} & - \sigma/\beta_{n,m}\end{array}\right).\\
\end{aligned}
\end{equation}
Since $\alpha_{n,h}, \beta_{n,m} > 0$, the determinant of the Hessian matrix is negative, where a saddle point is found in the $(\alpha_{n,h}, \beta_{n,m} )$ space. Therefore, the cost function $C$ achieves its local minimum and local maximum w.r.t. $\alpha_{n,h}$ and $\beta_{n,m}$, respectively.

\subsubsection{Step 2: Fix $\{\alpha_{n,h}\}$ and $\{\beta_{n,m}\}$, Update $\textbf{W}$}
\label{sec:updateW}
Fixing $\alpha_{n,h}$ and $\beta_{n,m}$, the minimization w.r.t. $\textbf{W}$ is convex. In Equation~\ref{eq11}, \textbf{W} satisfies $\textbf{W} \geq 0, \,\textbf{W}^{T} = \textbf{W},\, \|\textbf{W}\|_{F}^{2} = 1$. In our iterative optimization strategy, we impose \textbf{W} to be a distance metric for computation. Then, a closed-form solution to \textbf{W} can be derived (see Equation \ref{solve}).

\begin{theorem}
With $\{\alpha_{n,h}\}$ and $\{\beta_{n,m}\}$ fixed,  Equation~\ref{eq11} gives rise to  a closed-form solution for  updating $\textbf{W}$. Let
\begin{equation*}
\mathbf{\Sigma} = \sum_{n=1}^{N} \left(\Sigma_{n,H}-\Sigma_{n,M} \right), 
\end{equation*}
where $\Sigma_{n,H} =\sum_{h\in \mathcal{H}_{n} }  \alpha_{n,h}\big|\vec{x}_{n} - \vec{x}_{h}\big|\big|\vec{x}_{n} - \vec{x}_{h}\big|^{\,T} $, $\Sigma_{n,M} = \sum_{m\in \mathcal{M}_{n} } \beta_{n,m}\big|\vec{x}_{n} - \vec{x}_{m}\big|\big|\vec{x}_{n} - \vec{x}_{m}\big|^{\,T} $. Let the $\psi_{i}$'s  and $\mu_{i}$'s be the eigenvectors and eigenvalues of $\mathbf{\Sigma}$, respectively, so that
$\mathbf{\Sigma}   \psi_{i} = \mu_{i} \psi_{i}$ with $ \|\psi_{i}\|_{2}^{2} = 1.$
  Then, 
\begin{equation}
\mathbf{W} = \Phi\,\Phi^{T},
\label{solve}
\end{equation}
where $\Phi = (\sqrt{\eta_{1}}\psi_{1},\sqrt{\eta_{2}}\psi_{2},\cdots,\sqrt{\eta_{A}}\psi_{A}) $, $\sqrt{\eta_{i}}=\sqrt{(-\mu_{i})^{+}/\sqrt{\sum_{i=1}^{A}((-\mu_{i})^{+})^{2}}}$.
\label{thm1}
\end{theorem}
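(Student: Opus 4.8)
The plan is to exploit the fact that once $\{\alpha_{n,h}\}$ and $\{\beta_{n,m}\}$ are frozen, the entropy terms $E_{miss}(z_n)$ and $E_{hit}(z_n)$ become additive constants, so minimizing $C$ over $\textbf{W}$ reduces to minimizing the margin part alone. First I would rewrite each quadratic form via the trace identity $\big|\vec{x}_n - \vec{x}_j\big|^{\,T}\textbf{W}\big|\vec{x}_n - \vec{x}_j\big| = \operatorname{tr}\!\big(\textbf{W}\,\big|\vec{x}_n - \vec{x}_j\big|\big|\vec{x}_n - \vec{x}_j\big|^{\,T}\big)$. Summing over hits and misses and invoking the definitions of $\Sigma_{n,H}$ and $\Sigma_{n,M}$, the objective collapses to $\operatorname{tr}(\textbf{W}\,\mathbf{\Sigma})$ up to an additive constant, where $\mathbf{\Sigma}$ is exactly the symmetric matrix in the statement. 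Thus the task becomes a linear program over matrices: minimize $\operatorname{tr}(\textbf{W}\,\mathbf{\Sigma})$ over symmetric positive semidefinite $\textbf{W}$ (the distance-metric constraint) subject to $\|\textbf{W}\|_{F}=1$.

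Next I would diagonalize $\mathbf{\Sigma}=\sum_i \mu_i\,\psi_i\psi_i^{\,T}$ using its orthonormal eigensystem and pass to this eigenbasis. Writing $U=(\psi_1,\dots,\psi_A)$ and $\tilde{\textbf{W}}=U^{T}\textbf{W}U$, orthogonality of $U$ preserves both positive semidefiniteness and the Frobenius norm, while the objective becomes $\operatorname{tr}\!\big(\tilde{\textbf{W}}\operatorname{diag}(\mu_1,\dots,\mu_A)\big)=\sum_i \mu_i\,\tilde{W}_{ii}$, which depends only on the diagonal of $\tilde{\textbf{W}}$.

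The crux of the argument, and the step I expect to be the main obstacle, is showing that the minimizer is diagonal in this basis. The intuition is that off-diagonal entries of $\tilde{\textbf{W}}$ consume Frobenius-norm budget without affecting the objective, so they are wasteful. Concretely, given any feasible $\tilde{\textbf{W}}$ attaining the (necessarily negative) optimal value, replacing it by its diagonal part $D$ leaves the objective unchanged while giving $\|D\|_{F}\le 1$; since the diagonal entries of a PSD matrix are nonnegative, $D$ is itself PSD, and rescaling to $D/\|D\|_{F}$ strictly decreases the objective unless $\tilde{\textbf{W}}$ was already diagonal. This reduces the matrix problem to the vector program: minimize $\sum_i \mu_i \eta_i$ subject to $\eta_i\ge 0$ and $\sum_i \eta_i^{2}=1$, with $\eta_i=\tilde{W}_{ii}$. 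I would flag the nondegeneracy assumption that $\mathbf{\Sigma}$ has at least one negative eigenvalue, so the optimal value is strictly negative and the rescaling step bites; this holds generically because $\mathbf{\Sigma}$ is a difference of two PSD matrices.

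Finally, I would solve this vector program by Cauchy--Schwarz (equivalently by the KKT conditions). Because $\eta_i\ge 0$, any index with $\mu_i\ge 0$ should receive $\eta_i=0$; for the rest, $\sum_i \mu_i\eta_i=-\sum_i (-\mu_i)^{+}\eta_i\ge -\sqrt{\sum_i\big((-\mu_i)^{+}\big)^{2}}$, with equality precisely when $\eta_i\propto (-\mu_i)^{+}$. Normalizing so that $\sum_i\eta_i^{2}=1$ yields $\eta_i=(-\mu_i)^{+}/\sqrt{\sum_j\big((-\mu_j)^{+}\big)^{2}}$, matching the stated expression for $\sqrt{\eta_i}$. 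Transforming back, $\textbf{W}=U\tilde{\textbf{W}}U^{T}=\sum_i \eta_i\,\psi_i\psi_i^{\,T}=\Phi\Phi^{T}$, which is manifestly symmetric PSD with unit Frobenius norm, completing the derivation of Equation~\ref{solve}.
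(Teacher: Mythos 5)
Your proof is correct and arrives at the same final reduction as the paper---the vector program of minimizing $\sum_{i}\mu_{i}\eta_{i}$ subject to $\eta_{i}\geq 0$ and $\sum_{i}\eta_{i}^{2}=1$, solved by $\vec{\eta}=(-\vec{\mu})^{+}/\|(-\vec{\mu})^{+}\|_{2}$---but you get there by a genuinely different route. The paper parametrizes $\textbf{W}$ through its own eigen-decomposition $\textbf{W}=\Phi\Phi^{T}$, rewrites the objective as $\sum_{i}\phi_{i}^{\,T}\mathbf{\Sigma}\,\phi_{i}$, and then invokes first-order Lagrangian stationarity ($2\mathbf{\Sigma}\phi_{i}+4\lambda(\phi_{i}^{\,T}\phi_{i})\phi_{i}=0$) to conclude that each normalized column $\psi_{i}=\phi_{i}/\|\phi_{i}\|_{2}$ is an eigenvector of $\mathbf{\Sigma}$, with the mutual orthogonality of the $\phi_{i}$ inherited from the parametrization rather than argued within the optimization. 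You instead rotate $\textbf{W}$ into the eigenbasis of $\mathbf{\Sigma}$ and show the optimizer is diagonal there by a global exchange argument: deleting the off-diagonal of $\tilde{\textbf{W}}$ preserves the objective and positive semidefiniteness while only shrinking the Frobenius norm, after which rescaling strictly improves a negative objective. This buys you something the paper's argument does not deliver: genuine global optimality (stationarity conditions only characterize critical points), plus an explicit flag of the nondegeneracy assumption that $\mathbf{\Sigma}$ have at least one negative eigenvalue, without which the stated formula degenerates to $0/0$---a case the paper leaves silent. Both proofs share the same unexamined substitution: the constraint in Equation~\ref{eq11} requires $\textbf{W}$ to be element-wise non-negative, but both you and the paper replace this with the positive-semidefinite ``distance metric'' constraint, and the resulting $\Phi\Phi^{T}$ need not be element-wise non-negative.
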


\begin{proof}

Since \textbf{W} is a distance metric matrix, it is symmetric and positive-semidefinite. Let $\lambda_{1} \geq \lambda_{2} \geq \cdots \geq \lambda_{A} \geq 0$ be eigenvalues of \textbf{W}, then the 
eigen-decomposition of \textbf{W} is 
\begin{equation}
\begin{aligned}
\textbf{W}  & = P\Lambda P^{\,T} =P\Lambda^{1/2}\Lambda^{1/2}P^{\,T}, \\
& = [\sqrt{\lambda_{1}}\,\,p_{1},\cdots, \sqrt{\lambda_{A}}\,\,p_{A} ][\sqrt{\lambda_{1}}\,\,p_{1},\cdots, \sqrt{\lambda_{A}}\,\,p_{A} ]^{\, T} \equiv \Phi \Phi^T,
\end{aligned}
\end{equation}
where $P$ is an orthogonal matrix, and
 $\Phi = [\phi_{1},\cdots,\phi_{A}] \equiv [\sqrt{\lambda_{1}}p_{1},\cdots,\sqrt{\lambda_{A}}p_{A}]$. Thus, $\left< \phi_{i},\phi_{j}\right> = 0 $.
The constraint $\|\textbf{W}\|_{F}^{2} = 1$ can be simplified as:
\begin{equation}
\|\textbf{W}\|^{2}_{F} =  \sum_{i,j}w_{i,j}^{2} = \sum_{i}(\phi_{i}^{\,T}\phi_{i} )^{2} = 1.
\end{equation}

Let us rearrange Equation \ref{eq11} as:
\begin{equation}
\begin{aligned}
\sum_{h\in \mathcal{H}_{n} }  \alpha_{n,h}\big|\vec{x}_{n} - \vec{x}_{h}\big|^{\,T}\textbf{W}\big|\vec{x}_{n} - \vec{x}_{h}\big|&  \text{tr}(\textbf{W}\sum_{h\in \mathcal{H}_{n} }  \alpha_{n,h}\big|\vec{x}_{n} - \vec{x}_{h}\big|\big|\vec{x}_{n} - \vec{x}_{h}\big|^{\,T}), \\
\text{tr}(\textbf{W}\Sigma_{n,H})  = \text{tr}(\Sigma_{n,H} \sum_{i = 1}^{A}\phi_{i}\phi_{i}^{\,T}) & = \sum_{i=1}^{A}\phi_{i}^{\,T}\Sigma_{n,H}\,\,\phi_{i}.\\ 
\end{aligned}
\label{newcost1}
\end{equation}
Then, Equation \ref{eq11} can be further simplified as:
\begin{equation}
\begin{aligned}
C  & = \sum_{i=1}^{A} \phi_{i}^{\,T}\Sigma \,\, \phi_{i}, \\
& \text{subject to}: \|\textbf{W}\|^{2}_{F} =  \sum_{i}(\phi_{i}^{\,T}\phi_{i} )^{2} = 1,\left< \phi_{i},\phi_{j}\right> = 0,
\end{aligned}
\label{newcost2}
\end{equation}
where $\Sigma = \sum_{n=1}^{N}\Sigma_{n,H}-\Sigma_{n,M}$ and $\Sigma_{n,H} =\sum_{h\in \mathcal{H}_{n} }  \alpha_{n,h}\big|\vec{x}_{n} - \vec{x}_{h}\big|\big|\vec{x}_{n} - \vec{x}_{h}\big|^{\,T} $, $\Sigma_{n,M} =  \sum_{m\in \mathcal{M}_{n}} \beta_{n,m}\big|\vec{x}_{n} - \vec{x}_{m}\big|\big|\vec{x}_{n} - \vec{x}_{m}\big|^{\,T} $.
The orthogonality condition can be ignored  because this condition is required in the constraint. 
The Lagrangian for the optimization problem in Equation \ref{newcost2}
is easy to obtain:
\begin{equation}
L = \sum_{i=1}^{A} \phi_{i}^{\,T}\Sigma \,\, \phi_{i} + \lambda(\sum_{i=1}^{A}(\phi_{i}^{\,T}\phi_{i} )^{2} - 1).
\label{l}
\end{equation}
Differentiating $L$ with respect to $\phi_{i}$ yields:
\begin{equation}
\partial{L}/\partial{\phi_{i}} = 2\Sigma\phi_{i} + 4\lambda\phi_{i}^{\,T}\phi_{i}\phi_{i} = 0.
\label{derive}
\end{equation}
Denote $\phi_{i}/\|\phi_{i}\|_{2} := \psi_{i}$. From Equation \ref{derive},  we have
\begin{equation}
\Sigma \,\, \psi_{i} = \mu_{i}\,\, \psi_{i},
\label{de}
\end{equation}
where $\mu_{i} = - 2\lambda \|\phi_{i}\|_{2}^{2}  $. Thus, $\psi_{i}$  and $\mu_{i}$ are an eigenvector and eigenvalue of $\Sigma$, respectively. 
\par
Let $\phi_{i} = \sqrt{\eta_{i}}\psi_{i}$, $\eta_{i} \geq 0$. Thus, $C   = \sum_{i = 1}^{A} \sqrt{\eta_{i}}\psi_{i}^{\,T} \Sigma \sqrt{\eta_{i}}\psi_{i} = \sum_{i = 1}^{A}  \eta_{i}\mu_{i} \psi_{i}^{\,T}\psi_{i} = \sum_{i = 1}^{A} \eta_{i}\mu_{i}$, and $ \|\textbf{W}\|^{2}_{F} =  \sum_{i}(\sqrt{\eta_{i}}\psi_{i}^{\,T}\sqrt{\eta_{i}}\psi_{i} )^{2} =\sum_{i}(\eta_{i})^{2} =  1 $.
Then, Equation \ref{newcost2} can be simplified to be 
\begin{equation}
 C   = \sum_{i = 1}^{A}  \eta_{i}\mu_{i}, \,\, \text{subject to}: \,\, \sum_{i=1}^{A}(\eta_{i})^{2} =  1 , \eta_{i} \geq 0.
\label{z}
\end{equation}
\par
Note that  Equation \ref{z} is exactly the same as the original Relief Algorithm (Algorithm~\ref{ag1}):
\begin{equation}
\vec{\eta} = (-\vec{\mu})^{+}/\|  (-\vec{\mu})^{+} \|_{2},
\label{s1}
\end{equation}
where $(\vec{a})^{+} = [max(a_{1},0),max(a_{2},0),\cdots,max(a_{I},0)]$, and $\phi_{i} = \sqrt{\eta_{i}}\psi_{i}$. It is also easy to see that the updated $\textbf{W}$ is  a distance metric.
\end{proof}
\par
\begin{algorithm}[H]
\setstretch{1.35}
\caption{The IMMIGRATE Algorithm}
 {\bf Input}: a training dataset $\{ z_{n} = (\vec{x}_{n},y_{n})\}_{n=1,\cdots,N}$.\\ 
{\bf Initialization}: Let $t = 0$, randomly initialize $\textbf{W}^{(0)}$ satisfying $\textbf{W}^{(0)}\geq 0$, $\textbf{W}^{T} = \textbf{W}$, $\|\textbf{W}^{(0)}\|_{F}^{2}=1$.
\begin{algorithmic}
\setstretch{1.05}
   \Repeat\\
   \State Calculate $ \{ \alpha_{n,h}^{(t+1)}\}$, $\{ \beta_{n,m}^{(t+1)}\}$ with Equation~\ref{eq4}.\\
   \State Calculate $\textbf{W}^{(t+1)}$ with Theorem \ref{thm1}, Equation~\ref{solve}.\\
   \State $t = t+1$.\\
   \Until{the change of $C$ in Equation~\ref{eq11} is small enough or the iteration indicator $t$ reaches a preset limit}.
\end{algorithmic}
  {\bf Output}: $\textbf{W}^{(t)}$.
\label{ag5}
\end{algorithm}
\subsubsection{Weight Pruning}
\label{sec:re}
Some previous Relief-based algorithms offer options to remove weights lower than a preset threshold. IMMIGRATE offers a similar option to prune small weights: set small elements in $\textbf{W}$ to 0. By default, we use a threshold to prune small weights to 0, where $\textbf{W}$ should be normalized w.r.t. Frobenius norm after the pruning. 

\par
\subsubsection{Predict New Samples}
A prediction rule based on  the learned weight matrix \textbf{W} can be formulated as:
\begin{equation}
\begin{split}
&\hat{y}^{\prime} = \arg\min_{c} \sum_{y_{n}=c}\alpha_{n}^{c}(\vec{x}^{\,\prime})q(\vec{x}^{\,\prime},\vec{x}_{n}),\\ &\alpha_{n}^{c}(\vec{x}^{\,\prime}) = \frac{exp\big( -q( \vec{x}^{\,\prime},\vec{x}_{n})/\sigma\big)}{\sum_{y_{k}=c}exp\big( -q(\vec{x}^{\,\prime}, \vec{x}_{k})/\sigma\big)},
\end{split}
\label{eq9}
\end{equation}
where $z^{\prime} = (\vec{x}^{\,\prime}, y^{\,\prime})$ is a new instance, $c$ denotes the class and $\hat{y}^{\prime}$ is the predicted label. This prediction method assigns a new instance to a class that maximizes its hypothesis-margin using the learned weight matrix \textbf{W}, which makes it more stable than the $k$-NN method used in the traditional Relief-based algorithms. 

\subsection{IMMIGRATE in Ensemble Learning}
Boosting \citep{schapire1990strength, freund1996experiments, freund1999alternating} has been widely used to create ensemble learners that produce the state-of-the-art results in many tasks. Boosting combines a set of relatively weak base learners to create a much stronger learner. To use IMMIGRATE as the base classifier in the AdaBoost algorithm \citep{freund1996experiments}, we modify the cost function Equation~\ref{eq11} to include sample weights and use the modified version in the boosting iterations. We name the algorithm BIM, standing for \underline{\textbf{B}}oosted \underline{\textbf{IM}}MIGRATE (Refer to Equation~\ref{eq:bim} and Algorithm~\ref{ag:bim} for more details about BIM. BIM schedules the adjustment of the hyperparameter $\sigma$ in its boosting iterations. It starts with $\sigma$ being a predefined $\sigma_{max}$ and gradually reduces $\sigma$ by multiplying it with $(\sigma_{min}/\sigma_{max})^{1/T}$ at each interaction until reaching $\sigma_{min}$, where $T$ is a predefined maximum number of boosting iterations.

\begin{equation}
\begin{aligned}
 C & =  \sum_{n=1}^{N} D(\vec{x}_{n})\bigg(\sum_{h\in \mathcal{H}_{n} }  \alpha_{n,h}\big|\vec{x}_{n} - \vec{x}_{h}\big|^{\,T}\textbf{W}\big|\vec{x}_{n} - \vec{x}_{h}\big| -  \sum_{m\in \mathcal{M}_{n} }  \beta_{n,m}\big|\vec{x}_{n} - \vec{x}_{m}\big|^{\,T} \textbf{W} \big|\vec{x}_{n} - \vec{x}_{m}\big| \bigg) \\
& + \sigma \sum_{n=1}^{N} D(\vec{x}_{n})[E_{miss}(z_{n})-E_{hit}(z_{n})] , \\
& \text{subject to}:\textbf{W} \geq 0, \,\textbf{W}^{T} = \textbf{W},\, \|\textbf{W}\|_{F}^{2} = 1, \\
& \forall n, \sum_{h\in \mathcal{H}_{n}} \alpha_{n,h} = 1 ,  \sum_{m\in \mathcal{M}_{n}} \beta_{n,m} = 1, \text{and} \,\, \alpha_{n,h} \geq 0, \beta_{n,m} \geq \,0,\,\,  \\
\end{aligned}
\label{eq:bim}
\end{equation}
where $E_{miss}(z_{n}) =  - \sum_{m\in \mathcal{M}_{n} }  \beta_{n,m}\log\beta_{n,m}$, $E_{hit}(z_{n}) =  - \sum_{h\in \mathcal{H}_{n} }  \alpha_{n,h}\log\alpha_{n,h} $, $\sum_{n = 1}^{N}D(\vec{x}_{n}) = 1$, and $D(\vec{x}_{n}) \geq 0,\,\,\,\forall \,\,n$.
\begin{algorithm}[H]
\setstretch{1.15}
  \caption{The BIM Algorithm}\label{original}
  $T$: the number of classifiers for BIM.\\
 {\bf Input} :
a training dataset $\{ z_{n} = (\vec{x}_{n},y_{n})\}_{n=1,\cdots,N}$. \\
{\bf Initialization} :
for each $\vec{x}_{n}$, set $D_{1}(\vec{x}_{n}) = 1/N$.
  \begin{algorithmic}
  \setstretch{1.55}
      \For {$t$ := 1 \textbf{to} $T$}
         \State Limit max number of iteration of IMMIGRATE less than preset.
        \State Train weak IMMIGRATE classifier $h_{t}(x)$ using a chosen $\sigma_{t}$ and weights $D_{t}(x)$ by Equation~\ref{eq:bim}.
        \State Compute the error rate $\epsilon_{t}$ as $\epsilon_{t} = \sum_{i=1}^{N}D_{t}(x_{i})I[y_{i}\neq h_{t}(x_{i})]$.
        \If{  $\epsilon_{t} \geq 1/2$ or $\epsilon_{t} = 0$ }
         \State Discard $h_{t}$, $T = T - 1$ and continue .
         \EndIf  
        \State Set $\alpha_{t} = 0.5\times\log[(1-\epsilon_{t})/\epsilon_{t}] $.
        \State Update $D(x_{i})$: For each $x_{i}$, \\
        \hspace*{0.4in}$D_{t+1}(x_{i}) = D_{t}(x_{i})\exp(\alpha_{t}I[y_{i}\neq h_{t}(x_{i})])$.
        \State Normalize $D_{t+1}(x_{i})$, so that $\sum_{i=1}^{N}D_{t+1}(x_{i}) = 1$.
        \EndFor
          
  \end{algorithmic}
 {\bf Output}: $h_{final}(x) = \arg\max_{y\in\{0,1\}}\sum_{t:h_{t}(x) = y }\alpha_{t}$.
  \label{ag:bim}
\end{algorithm}
\subsection{IMMIGRATE for High-Dimensional Data Space}
When applied to high-dimensional data, IMMIGRATE can incur a high computational cost because it considers the interactions between every feature pair. To reduce the computational cost, we first use IM4E \citep{bei2015maximizing} to learn a feature weight vector, which is used to initialize the diagonal elements of \textbf{W} in the proposed quadratic-Manhattan measurement. We also use the learned feature weight vector to help pre-screen the features, and keep only those with weights above a preset limit. In the remaining computation, we only model interactions between those chosen features. The discarded features after pre-screening  can be added back empirically based on the need of a specific application. We term this procedure IM4E-IMMIGRATE, which is effective and computationally efficient. It can also be boosted (Boosted IM4E-IMMIGRATE) to be stronger.

\section{Experiments}
In our experiments, all continuous features are normalized with mean zero and unit variance. And cross-validation is used here to compare the performances of various approaches. We have implemented IMMIGRATE in R and MATLAB. {The R package is available at} \href{https://CRAN.R-project.org/package=Immigrate}{https://CRAN.R-project.org/package=Immigrate}, {and the MATLAB version is available at} \href{https://github.com/RuzhangZhao/Immigrate-MATLAB-}{https://github.com/RuzhangZhao/Immigrate-MATLAB-}. Both IMMIGRATE and BIM can be accelerated by parallel computing as their computations are matrix-based.  

\subsection{Synthetic Dataset}
We first test the robustness of the IMMIGRATE algorithm using a synthesized dataset where we have two interacting features following Gaussian distributions in a binary classification setting. The simulated dataset contains 100 samples from one class governed by a Gaussian distribution with mean $(4,2)^{T}$ and the covariance matrix $\left(
\begin{array}{cc} 
1 & 0.5 \\
0.5 & 1 \\
\end{array}
\right)$ and another 100 samples from the other class governed by a Gaussian distribution with mean $(6,0)^{T}$ and the same covariance matrix. In addition, we add noises following a Gaussian distribution with mean $(8,-2)^{T}$ and the covariance  matrix $\left(
\begin{array}{cc} 
8 & 4 \\
4 & 8 \\
\end{array}
\right)$ to the fist class, and add noises following a Gaussian distribution with mean $(2,4)^{T}$ and the same covariance matrix to the second class. Figure~\ref{fig1} shows a scatter plot of the synthesized dataset containing 10\% samples from the noise distributions. The slope of the orange dotted line in Figure~\ref{fig1} is 1, which separates data with different labels. 
\par
The noises are included to disturb the detection of the interaction term. The noise level starts from 5\%, and gradually increases by 5\% to 50\%. As the baseline, we apply logistic regression and observe that the $t$-test $p$-value of the interaction coefficient increases from $3\times 10^{-11}$ to $7\times 10^{-5}$ and  $0.7$ when the noise level increases  from 0\% to  10\% and 50\%. 
Local Feature Extraction (LFE, \citet{sun2008relief}) is a Relief-based algorithm which considers interaction terms indirectly, though the interaction information is only used for feature extraction. We run IMMIGRATE and LFE on the synthesized datasets and compare the weights of the interaction term between features 1 and 2 in Figure~\ref{fig2}, which shows IMMIGRATE is more robust than LFE.

\begin{figure}[H]
\centering
\includegraphics[width=7.2cm]{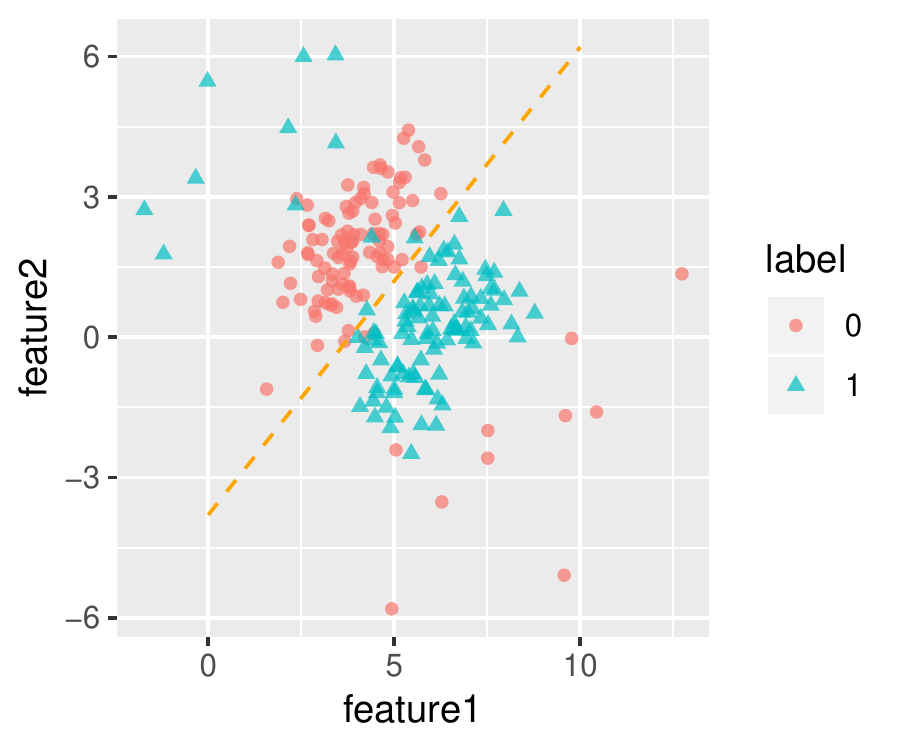}
\caption{The synthesized dataset with 10\% noise.}
\label{fig1}
\end{figure}

\begin{figure}[H]
\centering
\includegraphics[width=7cm]{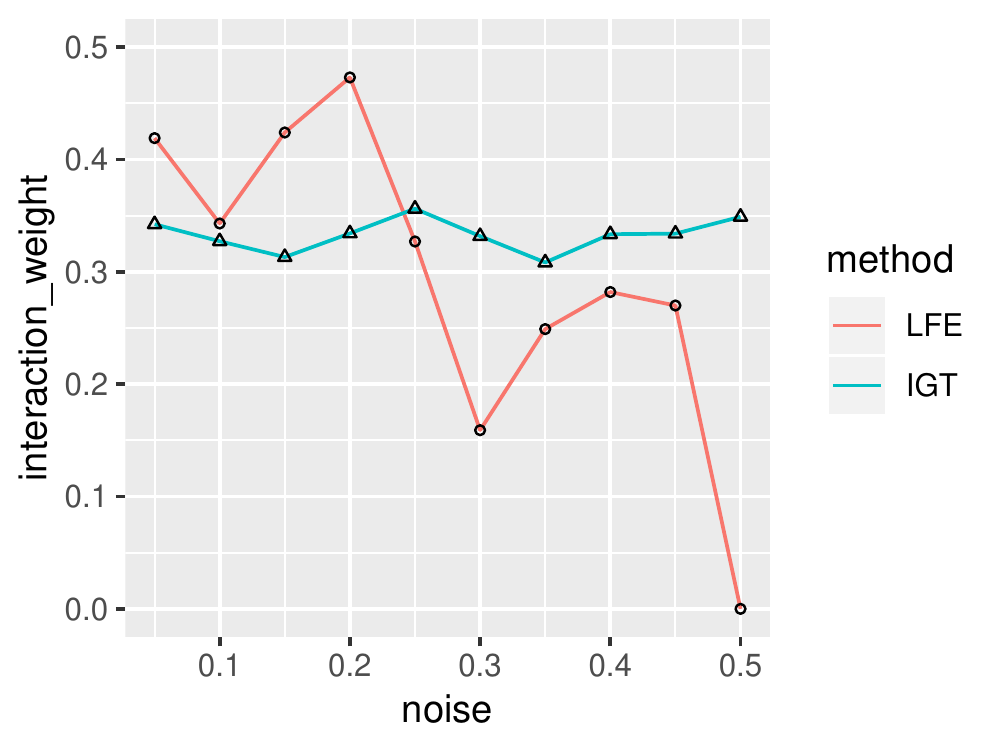}
\caption{IMMIGRATE (IGT) is more robust than LFE.}
\label{fig2}
\end{figure}

\subsection{Real Datasets}
We compare IMMIGRATE with several existing popular methods using real datasets from the \href{http://archive.ics.uci.edu/ml}{UCI database}. The following algorithms are considered in the comparison: Support Vector Machine \citep{soentpiet1999advances} with Sigmoid Kernel (SV1), Support Vector Machine with Radial basis function Kernel (SV2), LASSO (LAS) \citep{tibshirani1996regression}, Decision Tree (DT) \citep{freund1999alternating}, Naive Bayes Classifier (NBC) \citep{john1995estimating}, Radial basis function Network (RBF) \citep{haykin1994neural}, 1-Nearest Neighbor (1NN) \citep{aha1991instance}, 3-Nearest Neighbor (3NN), Large Margin Nearest Neighbor (LMN) \citep{ weinberger2009distance}, Relief (REL) \citep{kira1992practical}, ReliefF (RFF) \citep{kononenko1994estimating, robnik2003theoretical}, Simba (SIM) \citep{gilad2004margin}, and Linear Discriminant Analysis (LDA) \citep{fisher1936use}. In addition, several methods designed for detecting interaction terms are included: LFE \citep{sun2008relief}, Stepwise conditional likelihood variable selection for Discriminant Analysis (SOD) \citep{li2018robust}, and hierNet (HIN) \citep{bien2013lasso}. We also include three most widely used and competitive ensemble learners:  Adaptive Boosting (ADB) \citep{freund1996experiments, freund1999alternating}, Random Forest (RF) \citep{breiman2001random}, and XgBoost (XGB) \citep{chen2016xgboost}. We use the following abbreviations when presenting the results: IM4 for IM4E, IGT for IMMIGRATE, and B4G for the boosted IM4E-IMMIGRATE.

 Whenever possible, we use the settings of the aforementioned methods reported in their original papers: LMNN uses 3-NN classifier; Relief and Simba use Euclidean distance and 1-NN classifier; ReliefF uses Manhattan distance and $k$-NN classifier ($k$=1,3,5 is decided by internal cross-validation); in SODA, gam (=0, 0.5, 1) is determined by internal cross-validation and logistic regression is used for prediction. The IM4E algorithm has two hyperparameters $\lambda$ and $\sigma$. We fix $\lambda = 1$ as it has no actual contribution and tune $\sigma$ as suggested by \citet{bei2015maximizing}. Hence, the IMMIGRATE algorithm only has one hyperparameter $\sigma$. When tuning $\sigma$, we gradually decrease $\sigma$ from $\sigma_{0} = 4$ by half each time until it is not larger than $0.2$. The preset limit for weight pruning is $1/A$, where $A$ is the number of features. Furthermore, the preset iteration number is chosen to be 10. For each dataset, $\sigma$ and whether weight pruning is applied are determined by the best internal cross-validation results. For BIM, we use $\sigma_{max} = 4$, $\sigma_{min} = 0.2$, and the maximal number of boosting iterations $T$ is 100. The preset threshold in IM4E-IMMIGRATE is $2/A$.

We repeat ten-fold cross-validation ten times for each algorithm on each dataset, i.e., 100 trials are carried out. When comparing two algorithms (i.e., A vs. B), we calculate the paired Student's $t$-test using the results of 100 trials. First, the null hypothesis is there is no difference between the performances of A and those of B. When the $p$-value is larger than the significant level cutoff 0.05, we say A "Tie" B, which means there is no significant difference between their performances. When the $p$-value is smaller than the significant level cutoff 0.05, the second null hypothesis is the performances of B are no worse than those of A. When the new $p$-value is smaller than the significant level cutoff 0.05, we say A "wins", which means A on average performs significantly better than B on this dataset, and vice versa.

\subsubsection{Gene Expression Datasets}
Gene expression datasets typically have thousands of features. We use the following five gene expression datasets for feature selections: GLI \citep{freije2004gene}, Colon (COL) \citep{alon1999broad}, Myeloma (ELO) \citep{tian2003role}, Breast (BRE) \citep{van2002gene}, Prostate (PRO) \citep{singh2002gene}. All datasets have more than 10,000 features. Refer to Table~\ref{table:details} in Appendix A for details of all datasets. 
\par
We perform ten-fold cross-validation ten times, i.e., 100 trials in total. The results are summarized in Table~\ref{table:gene}. The last row "(W,T,L)" indicates the number of times that the Boosted IM4E-IMMIGRATE (B4G) W,T,L (win,tie,loss) compared with each algorithm by the paired Student's $t$-test with the significance level of $\alpha = 0.05$. The comparison results are also summarized in Figure~\ref{fig3} (top plot) for easy comparison. Although our B4G is not always the best, it outperforms other methods in most cases. In particular, when IM4E-IMMIGRATE (EGT) is compared with other methods, it also outperforms in most cases.

\begin{figure}[H]
\centering
\includegraphics[width=15cm]{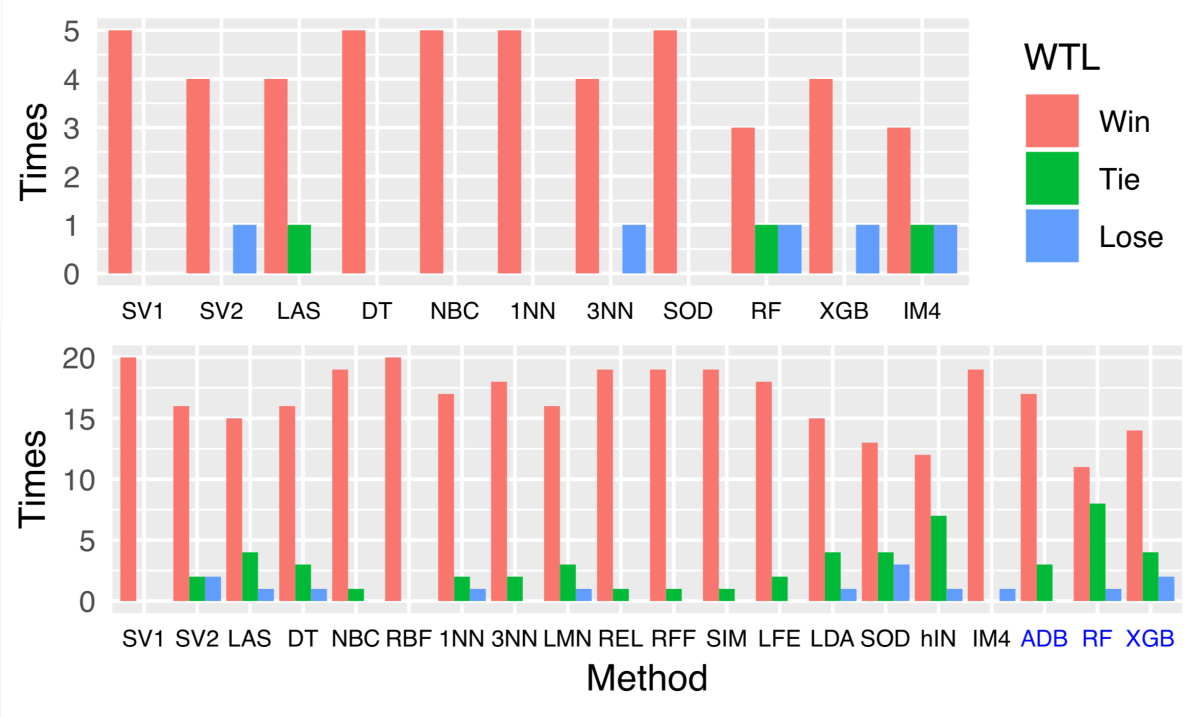}
\caption{Results of paired $t$-test on gene expression datasets (\textbf{top subplot}) and UCI datasets (\textbf{bottom subplot}). The top plot shows how well (i.e., "Win" (red bars), "Tie" (green bars), and "Lose" (blue bars)) our Boosted IM4E-IMMIGRATE performs compared with other approaches. In the bottom plot, the results of methods labeled in black are the comparisons with our IMMIGRATE, and the results of methods (ABD, RF, and XGB) labeled in blue are the comparisons with our BIM.}
\label{fig3}
\end{figure}


\par

\begin{table}[H]
\begin{center}
\caption{Summarizes the accuracies on five high-dimensional gene expression datasets.} 
\vskip 0.03in
\label{table:gene}
\begin{threeparttable}
\setlength{\tabcolsep}{4pt}
\begin{tabular}{c|ccccccccccccccc}
\hline
Data&SV1&SV2&LAS&DT&NBC&1NN&3NN&SOD&RF&XGB&IM4&\textbf{EGT}&\textbf{B4G}\\
\hline
GLI&85.1&86.0&85.2&83.8&83.0&88.7&87.7&88.7&  87.6  &  86.3 & 87.5&89.1& 89.9 \\

COL&73.7&82.0&80.6&69.2&71.1&72.1&77.9&78.1&  82.6  &  79.5 & 84.3&78.6& 82.5 \\

ELO&72.9&90.2&74.6&77.3&76.3&85.6&91.3&86.9&79.2&77.9 & 88.9&88.6& 88.4\\ 

BRE& 76.0&88.7&91.4&76.4&69.4&83.0&73.6&82.6& 86.3 &  87.3 &  88.1&90.2& 91.5\\

PRO&71.3&69.9&87.9&86.4&68.0&83.2&82.7&83.2& 91.8& 90.5 &88.0&89.5&89.7\\

\hline
W,T,L$^{1}$&\textbf{\underline{5}},0,0
&\textbf{\underline{4}},0,1
&\textbf{\underline{4}},1,0
&\textbf{\underline{5}},0,0
&\textbf{\underline{5}},0,0
&\textbf{\underline{5}},0,0
&\textbf{\underline{4}},0,1
&\textbf{\underline{5}},0,0
&\textbf{\underline{3}},1,1
&\textbf{\underline{4}},0,1
&\textbf{\underline{3}},1,1
&-,-,-&-,-,-\\
\hline
\end{tabular}
\begin{tablenotes}
\item[1] \small{The last row shows the number of times Boosted IM4E-IMMIGRATE(\textbf{B4G}) W,T,L (win,tie,loss) compared with each algorithm by paired $t$-test\\

\item[**] Ten-fold cross-validation is performed for ten times, namely 100 trials are carried out for each dataset. The average accuracies are reported on the corresponding datasets in Table~1,2,3. Here, with 100 trials and two algorithms A and B, paired Student's $t$-test is carried out between the results of these two algorithms. Under the significance level of $\alpha = 0.05$, algorithm A is significantly better than another algorithm B (i.e. A wins) on a dataset if the $p$-value of the paired Student's $t$-test with corresponding null hypothesis is less than $\alpha = 0.05$. (The rule also applies to experiments on UCI datasets)
}. 
\end{tablenotes}
\end{threeparttable}
\end{center}
\end{table}

\subsubsection{UCI Datasets}
We also carry out an extensive comparison using many UCI datasets \citep{frank2010uci}: BCW, CRY, CUS, ECO, GLA, HMS, IMM, ION, LYM, MON, PAR, PID, SMR, STA, URB, USE and WIN. Refer to Appendix A Table~\ref{table:details} for the full names and links for those datasets. If a dataset has more than two classes, we use two classes with the largest sample size. In addition, we use three large-scale datasets: CRO$^{*}$, ELE$^{*}$, WAV$^{*}$.
\par
We perform ten-fold cross-validation ten times. Tables~\ref{table:uci} for IMMIGRATE and Table~\ref{table:uci2} for BIM show the average accuracies on the corresponding datasets. In Table~\ref{table:uci}, the last row "(W,T,L)" indicates the number of times IMMIGRATE (IGT) and BIM W,T,L (win,tie,loss) when compared with each algorithm separately by using the paired Student's $t$-test with the significance level of $\alpha = 0.05$. The comparison results are also summarized in Figure~\ref{fig3} (bottom subplot), where the first 17 items (black) indicate the results for IMMIGRATE while the last three items (blue) indicate the results for BIM.
\par
Although IMMIGRATE or BIM is not always the best, they outperform other methods significantly in one-to-one comparisons in terms of cross-validation results. Figure~\ref{fig3} (bottom subplot, black part) and Table~\ref{table:uci} show that IMMIGRATE achieves the state-of-the-art performance as the base classifier while Figure~\ref{fig3} (bottom subplot, blue part) and Table~\ref{table:uci2} show BIM achieves the state-of-the-art performance as the boosted version. To visualize the feature selection results of our approaches, we plot the feature weight heat maps of four datasets (GLA, LYM, SMR and STA) in Appendix~\ref{sec:heat} Figure~\hyperlink{afig1}{B1}.

\begin{table}[H]
\begin{center}
\addtolength{\leftskip} {-2cm}
    \addtolength{\rightskip}{-1cm}
\caption{Summarizes the accuracies on UCI datasets.} 
\label{table:uci}
\vskip 0.1in
\begin{threeparttable}
\setlength{\tabcolsep}{1pt}
\setlength{\tabbingsep}{1pt}
{
\renewcommand{\arraystretch}{0.95}
\begin{tabular}{c|cccccccccccccccccccccccc}
\hline
\textbf{Data}& SV1& SV2& LAS& 
DT& 
NBC& RBF& 
1NN& 3NN& LMN&REL& RFF& SIM& LFE& LDA& SOD & hIN & IM4& \textbf{IGT}\\
\hline
BCW&61.4& 66.6 &71.4& 70.5& 62.4& 56.9& 68.2& 72.2& 69.5&66.4 & 67.1 &67.7 &67.1 &73.9 &65.2 &71.8& 66.4&74.5 \\

CRY& 72.9& 90.6& 87.4& 85.3& 84.4& 89.7& 89.1& 85.4&87.8& 73.8& 77.2& 79.7& 86.0& 88.6& 86.0& 87.9& 86.2& 89.8\\

CUS & 86.5& 88.9& 89.6& 89.6& 89.5& 86.8& 86.5& 88.7&88.8&82.1& 84.7& 84.3& 86.4& 90.3& 90.8& 90.3& 87.5& 90.1 \\

ECO&92.9 &96.9 &98.6 &98.6 &97.8 &94.6 &96.0 &97.8 &  97.8&89.0 &90.7 &91.2 &93.1 &99.0 &97.9 &98.7 &97.5 & 98.2 \\

GLA&64.2&76.7&72.3&79.4&69.5&73.0&81.1&78.1&79.4&64.1&63.5&67.1&81.2&72.0&75.3&75.0&78.0& 87.5\\

HMS& 63.8& 64.5& 67.7& 72.5& 67.2& 66.8& 66.0& 69.3& 71.2 & 65.3& 66.0& 65.7& 64.9& 69.0& 67.4& 69.4& 66.6& 69.2\\

IMM& 74.3& 70.6& 74.4& 84.1& 77.9& 67.3& 69.4& 77.9& 76.7  & 69.9& 71.8& 69.0& 75.0& 75.2&72.3& 70.2& 80.7& 83.8\\

ION&80.5& 93.5 &83.6 &87.4 &89.4 &79.9 &86.7 &84.1 & 84.5&85.8 &86.2 &84.2 &91.0 &83.3 &90.3 &92.6 &88.3 & 92.9 \\

LYM&83.6&81.5&85.2&75.2&83.6&71.1&77.2&82.8& 86.6&64.9&71.0&70.4&
79.6&85.2&79.3&84.8&83.3&87.2\\

MON& 74.4& 91.7& 75.0& 86.4& 74.0& 68.2& 75.1& 84.4& 84.9 & 61.4& 61.8& 65.0& 64.8& 74.4& 91.9& 97.2& 75.6& 99.5 \\

PAR&72.7 &72.5 &77.1 &84.8 &74.1 &71.5 &94.6 &91.4 &91.8 &87.3 &90.3 &84.6 &94.0 &85.6 &88.2 &89.5 &83.2  & 93.8 \\

PID&65.6&73.1&74.7&74.3&71.2&70.3&70.3&73.5& 74.0&64.8&68.0&67.0&67.8&74.5&75.7&74.1&72.1& 74.7 \\

SMR& 73.5& 83.9& 73.6& 72.3& 70.3& 67.1& 86.9& 84.7&86.1& 69.5& 78.3& 81.0& 84.3& 73.1& 70.5& 83.0& 76.4& 86.5  \\ 

STA&69.8&71.6&70.8&68.9&71.0&69.5&67.8&70.8& 71.3&59.7&64.0&63.0&66.7&71.3& 71.8&69.2&70.8& 75.9\\

URB& 85.2& 87.9& 88.1& 82.6 & 85.8 & 75.3& 87.2 & 87.5 & 87.9& 81.9 & 83.2 & 73.0 & 87.9 & 73.0 & 87.9 & 88.3 & 87.4 & 89.9 \\

USE&95.7 &95.2 &97.2 &93.2 &90.6 &84.9 &90.5 &91.5 & 92.0&54.5 &63.7 &69.5 &85.8&96.9&96.2&96.5&94.1&96.4 \\

WIN& 98.3& 99.3& 98.6& 93.1& 97.3& 97.2& 96.4& 96.6& 96.5 & 87.2& 95.0& 95.0& 93.8& 99.7& 92.9& 98.9& 98.2& 99.0\\

CRO$^{*}$& 75.4 & 97.5 & 89.9 & 91.0 & 88.8 & 75.4 & 98.4 & 98.5 & 98.6 & 98.5 & 98.7 & 95.1 & 98.6 & 89.1 & 95.2 & 95.5 & 81.9 & 98.2 \\

ELE$^{*}$ & 72.3 & 95.7 & 79.9 & 80.0 &  82.5 & 70.8 & 81.1 & 83.9 & 89.7 & 64.6 & 75.4 & 76.2 & 79.8 & 79.9 & 93.7 & 93.6 & 83.2 & 93.7 \\

WAV$^{*}$ & 90.0 & 91.9 & 92.2 & 86.2 & 91.4 & 84.0 & 86.5 & 88.3 & 88.8 & 77.6 & 80.0 & 83.6 & 84.7 & 91.8 & 92.0 & 92.1 & 91.1 & 92.4 \\

\hline
W,T,L$^{1}$& \small{\textbf{\underline{20}}},0,0
&\small{\textbf{\underline{16}}},2,2
&\small{\textbf{\underline{15}}},4,1
&\small{\textbf{\underline{16}}},3,1
&\small{\textbf{\underline{19}}},1,0
&\small{\textbf{\underline{20}}},0,0
&\small{\textbf{\underline{17}}},2,1
&\small{\textbf{\underline{18}}},2,0
&\small{\textbf{\underline{16}}},3,1
&\small{\textbf{\underline{19}}},1,0
&\small{\textbf{\underline{19}}},1,0
&\small{\textbf{\underline{19}}},1,0
&\small{\textbf{\underline{18}}},2,0
&\small{\textbf{\underline{15}}},4,1
&\small{\textbf{\underline{13}}},4,3
&\small{\textbf{\underline{12}}},7,1
&\small{\textbf{\underline{19}}},0,1
&\small{-,-,-}\\
\hline
\end{tabular}}
\begin{tablenotes}
\item[1] \small{The last row (W,T,L) shows the number of times that IMMIGRATE (\textbf{IGT}) wins/ties/losses an existing algorithm according to the paired $t$-test on the cross-validation results.
}
\end{tablenotes}
\end{threeparttable}
\end{center}
\end{table}

\par

\begin{table}[H]
\begin{center}
\caption{Summarizes the accuracies on the UCI datasets.} 
\label{table:uci2}
\begin{threeparttable}
\setlength{\tabcolsep}{1pt}
\setlength{\tabbingsep}{1pt}
{\renewcommand{\arraystretch}{.95}
\begin{tabular}{c|cccccccccccccccccccccccc}
\hline
Data&ADB&RF&XGB& \textbf{BIM} \\
\hline

BCW&78.2 &78.6&78.6& 78.3\\

CRY&90.4&92.9 &89.9& 91.5\\

CUS & 90.8&91.1& 91.4& 91.0\\

ECO&98.0 &98.9& 98.2& 98.6 \\

GLA&85.0&87.0& 87.9& 86.8\\

HMS& 65.8& 72.1 &70.0&  72.0 \\

IMM&77.2&84.2& 81.7& 86.1 \\

ION&92.1&93.5& 92.5& 93.1 \\

LYM&84.8& 87.0& 87.4& 88.1 \\

MON& 98.4&95.8 &99.1&99.7\\
 
PAR & 90.5 &91.0 &91.9 &93.2\\

PID&73.5&76.0& 75.1& 76.2\\

SMR& 81.4& 82.8& 83.3&86.6\\

STA& 69.0& 71.3& 69.5 & 74.1\\

URB & 87.9 & 88.6 &88.8& 91.4 \\ 

USE&96.0&95.3& 94.9& 96.1 \\

WIN& 97.5& 99.1 &98.2& 99.1 \\

CRO$^{*}$& 97.3&  97.4&  98.5 & 98.6 \\

ELE$^{*}$ & 91.1 & 92.3 & 95.2 & 94.1 \\

WAV$^{*}$ & 89.5 & 91.2 & 90.8 & 93.3 \\

\hline
W,T,L$^{1}$&\textbf{\underline{17}},3,0&\textbf{\underline{11}},8,1&\textbf{\underline{14}},4,2& -,-,-\\
\hline
\end{tabular}}
\begin{tablenotes}
\item[1] \small{The last row (W,T,L) shows the number of times that the Boosted IMMIGRATE (\textbf{BIM}) wins/ties/losses an existing algorithm according to the paired $t$-test on the cross-validation results.} 
\end{tablenotes}
\end{threeparttable}
\end{center}
\end{table}

\section{Related Works}
In many recent publications, Relief-based algorithms and feature selection with interaction terms have been well explored. Some methods are reviewed here to show the connection and differences with our approach. The hypothesis-margin definition in Equation~\ref{def2} adopted in this work is also used in some previous studies, such as \citet{bei2015maximizing}. However, \citet{bei2015maximizing} do not consider the interactions between features. Our work provides a measurable way to show the influence of each feature interaction.
\par
\citet{sun2008relief} propose local feature extraction (LFE) method, which learns linear combination of features for feature extraction. LFE explores the information of feature interaction terms indirectly, which is partly our aim. However, LFE does not consider global information or margin stability, which results in significant differences in the cost function and the optimization procedures. 
\par
Our quadratic-Manhattan measurement defined in Equation~\ref{def4} is related to the Mahalanobis metric used in previous works on metric learning, such as Large Margin Nearest Neighbor (LMNN) \citep{weinberger2009distance}. \citet{weinberger2009distance} use semi-definite programming for learning distance metric in LMNN. LMNN and our approach are both based on K-Nearest Neighbors. A major difference is that our quadratic-Manhattan measurement has matrix $\textbf{W}$ to be non-negative and symmetric  (element-wise non-negative) with its Frobenius norm $\|\textbf{W}\|_{F}=1$, whereas
metric learning only requires its matrix to be symmetric semi-positive definite. Actually, the non-negative element requirement of $\bW$ provides IMMIGRATE a high intepretability, where items in matrix indicate interaction importance. Quadratic-Manhattan measurement serves well in the classification task and offers a direct explanation about how features, in particular, feature interaction terms, contribute to the classification results.

\section{Conclusions and Discussion}
In this paper, we propose a new quadratic-Manhattan measurement to extend the hypothesis-margin framework, based on which a  feature selection algorithm IMMIGRATE is developed for detecting and weighting interaction terms. We also develop its extended versions,  Boosted IMMIGRATE (BIM) and IM4E-IMMIGRATE.  IMMIGRATE and its variants follow the principle of maximizing stable hypothesis-margin and are implemented via a computationally efficient iterative optimization procedure.
Extensive experiments show that IMMIGRATE outperforms   state-of-the-art methods significantly, and its boosted version BIM outperforms other boosting-based approaches. 
In conclusion, compared with other Relief-based algorithms, IMMIGRATE mainly has the following advantages: (1) both local and global information are considered; (2) interaction terms are used; (3) robust and less prone to noise; (4) easily boosted. The computation time of IMMIGRATE variants is comparable to other methods able to detect interaction terms.  
\par
There are some limitations for IMMIGRATE and we discuss some directions of improving the algorithm accordingly. First, in Section~\ref{sec:re}, small weights are removed to obtain sparse solutions using some cutoffs directly, which is hard to do inference for the obtained weights. Penalty terms such as the
$l_{1}$- or $l_2$-penalty are usually applied to shrink and select important weights. We suggest that our cost function Equation~\ref{eq11} can be modified to include such a penalty term to replace the process of weight pruning in Section~\ref{sec:re}. Second, although IMMIGRATE is efficient, it still costs much time to compute data with large size. To further improve the computational efficiency of IMMIGRATE for large-scale datasets, we can improve training by using well selected prototypes \citep{garcia2012prototype}, which, as a subset of the original data,  are representative but with noisy and redundant samples removed. 
Third, IMMIGRATE only considers pair-wise interactions between features. Interactions among multiple features can play important roles in real applications \citep{yu2019multivariate,vinh2016can}. Our work provides a basis for developing new algorithms to detect multi-feature interactions. For example, people can use tensor form to consider weights for multi-feature interactions. Fourth, although our iterative optimization procedure is efficient, it achieves {ad hoc} solutions with no guarantee of reaching the global optimum. 
It remains an open challenge to develop better optimization algorithms. Finally, the selection of an appropriate $\sigma$ currently relies on internal cross-validation, which cannot uncover the underlying properties of $\sigma$. A better strategy may be developed by rigorously investigating the theoretical contributions of $\sigma$. 
\par

\vspace{6pt} 






\section*{Appendix}
\appendix

\section{Information of the Real Datasets}


\begin{table}[H]
\begin{center}
\centering
\caption{Summary of the UCI datasets and the gene expression datasets.}
\label{table:details}
\setlength{\tabcolsep}{2.5pt}
\begin{threeparttable}
{\renewcommand{\arraystretch}{.95}
\begin{tabular}{c|cc|c}
\hline
Data&No.F$^{1}$ & No.I$^{2}$ & Full Name\\
\hline
BCW&  9 &  116 & \href{https://archive.ics.uci.edu/ml/datasets/Breast+Cancer+Wisconsin+(Prognostic)}{Breast Cancer Wisconsin (Prognostic)}\\

CRY& 6 & 90 & \href{https://archive.ics.uci.edu/ml/datasets/Cryotherapy+Dataset+}{Cryotherapy}\\

CUS & 7 & 440 & \href{https://archive.ics.uci.edu/ml/datasets/Wholesale\%2Bcustomers}{Wholesale customers}  \\

ECO& 5 & 220 & \href{https://archive.ics.uci.edu/ml/datasets/ecoli}{Ecoli}\\

GLA& 9 & 146 & \href{https://archive.ics.uci.edu/ml/datasets/glass+identification}{Glass Identification}  \\

HMS&  3 & 306 & \href{https://archive.ics.uci.edu/ml/datasets/Haberman\%27s+Survival}{Haberman's Survival} \\

IMM& 7 & 90 &  \href{https://archive.ics.uci.edu/ml/datasets/Immunotherapy+Dataset}{Immunotherapy} \\

ION& 32 & 351 & \href{https://archive.ics.uci.edu/ml/datasets/ionosphere}{Ionosphere} \\

LYM& 16 & 142 & \href{https://archive.ics.uci.edu/ml/datasets/Lymphography}{Lymphograph} \\

MON& 6 & 432 & \href{https://archive.ics.uci.edu/ml/datasets/MONK's+Problems}{MONK's Problems} \\
 
PAR &22 & 194 & \href{https://archive.ics.uci.edu/ml/datasets/parkinsons}{Parkinsons}  \\

PID& 8 & 768 & \href{https://github.com/cran/mlbench/blob/master/data/PimaIndiansDiabetes.rda}{Pima-Indians-Diabetes} \\

SMR& 60&  208 &\href{https://archive.ics.uci.edu/ml/datasets/Connectionist+Bench+\%28Sonar\%2C+Mines+vs.+Rocks\%29}{Connectionist Bench (Sonar, Mines vs. Rocks)}  \\

STA& 12 & 256 & \href{http://archive.ics.uci.edu/ml/datasets/statlog+(heart)}{Statlog (Heart)} \\

URB& 147 & 238 & \href{https://archive.ics.uci.edu/ml/datasets/Urban+Land+Cover}{Urban Land Cover} \\ 

USE& 5 & 251 & \href{https://archive.ics.uci.edu/ml/datasets/User+Knowledge+Modeling#}{User Knowledge Modeling} \\

WIN& 13 & 130 & \href{https://archive.ics.uci.edu/ml/datasets/wine}{Wine} \\

CRO$^{*}$& 28 & 9003 & \href{https://archive.ics.uci.edu/ml/datasets/Crowdsourced+Mapping}{ Crowdsourced Mapping}  \\

ELE$^{*}$ & 12 & 10000 & \href{https://archive.ics.uci.edu/ml/datasets/Electrical+Grid+Stability+Simulated+Data+}{Electrical Grid Stability Simulated} \\

WAV$^{*}$ & 21 & 3304 &\href{https://archive.ics.uci.edu/ml/datasets/Waveform+Database+Generator+(Version+1)}{ Waveform Database Generator}\\

GLI& 22283 & 85 & Gliomas Strongly Predicts Survival\citep{freije2004gene}\\

COL& 2000 & 62 &  Tumor and Normal Colon Tissues\citep{alon1999broad}\\

ELO& 12625 & 173 & Myeloma\citep{tian2003role}\\

BRE& 24481 & 78 & Breast Cancer\citep{van2002gene}\\

PRO& 12600 & 136 & Clinical Prostate Cancer Behavior\citep{singh2002gene} \\

\hline
\end{tabular}}
\begin{tablenotes}
\item[1] No.F: Number of Features.
\vskip 0.05in
\item[2] No.I: Number of Instances.
\end{tablenotes}
\end{threeparttable}
\end{center}
\end{table}

\section{Heat Maps}

\label{sec:heat}
\captionsetup[figure]{labelformat=empty, justification=centering}
\begin{figure}[H]
\centering
\begin{minipage}[t]{0.48\textwidth}
\centering
\includegraphics[width=6.8cm]{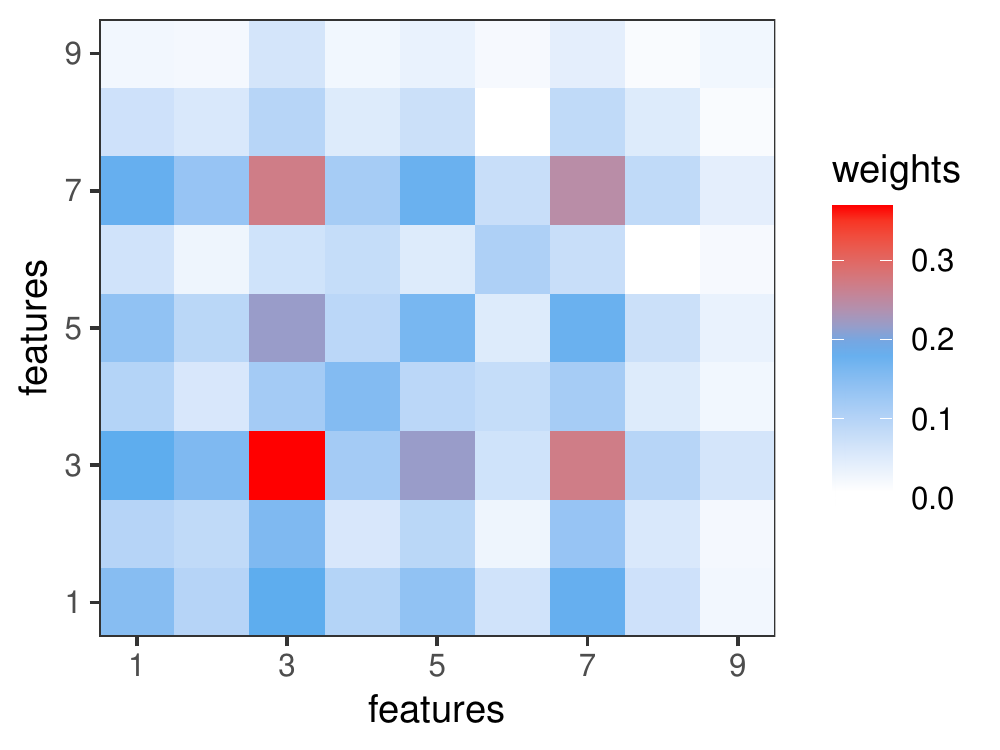}

\caption{GLA\hspace*{2em}}
\end{minipage}
\begin{minipage}[t]{0.48\textwidth}
\centering
\includegraphics[width=6.8cm]{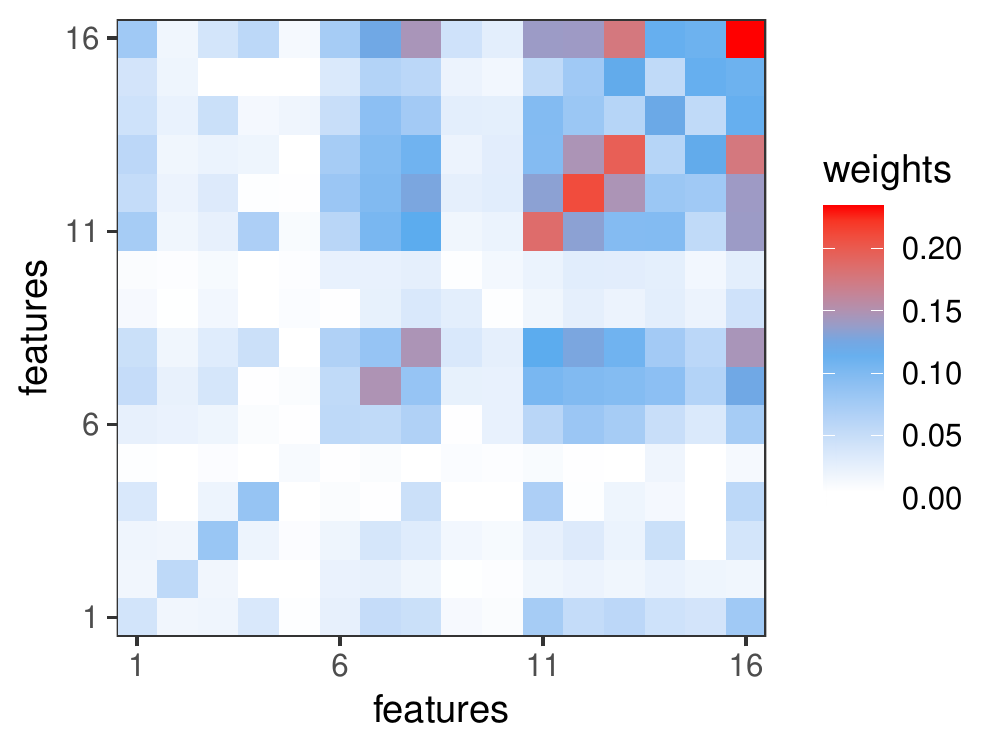}

\caption{LYM\hspace*{2em}}

\end{minipage}
\end{figure}

\vspace{0.8cm}
\begin{figure}[H]
\begin{minipage}[t]{0.48\textwidth}
\centering
\includegraphics[width=6.8cm]{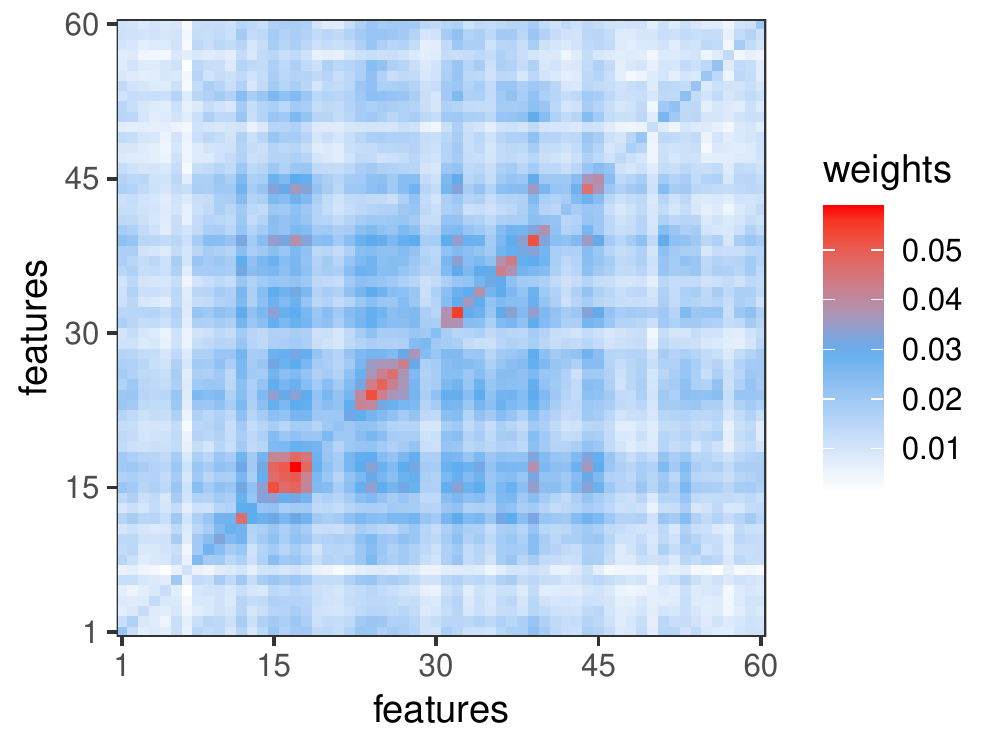}

\caption{SMR\hspace*{2em}}
\end{minipage}
\begin{minipage}[t]{0.48\textwidth}
\centering
\includegraphics[width=6.8cm]{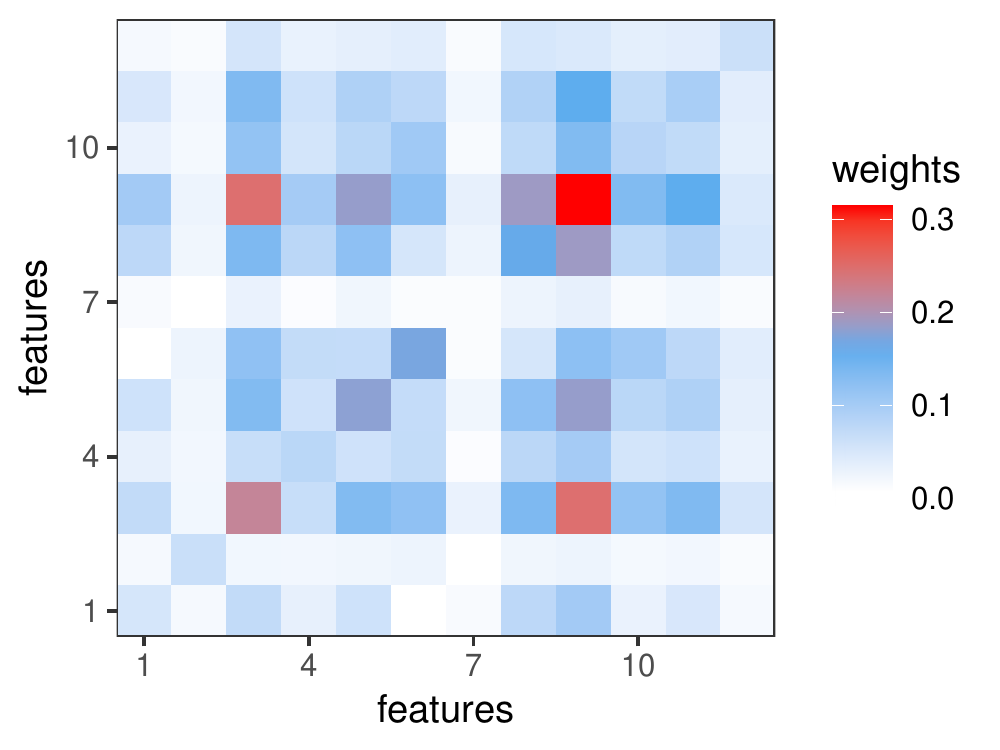}

\caption{STA\hspace*{2em}}
\end{minipage}
\begin{center}
\hypertarget{afig1}{}
\caption{\textbf{Figure}~\textbf{B1}. Heat Maps of Feature Weights Learned by IMMIGRATE. The color bars show the values of corresponding colors in the plots.}
\end{center}
\end{figure}

\bibliographystyle{unsrtnat}  
\bibliography{template}

\end{document}